\documentclass{article} % For LaTeX2e
\usepackage{iclr2026_conference}
\usepackage{times}

\usepackage{microtype}
\usepackage{hyperref}
\usepackage{url}
\usepackage{booktabs}
\usepackage[ruled,vlined]{algorithm2e}
\usepackage{caption}

\usepackage{lineno}

% Recommended, but optional, packages for figures and better typesetting:
\usepackage{microtype}
\usepackage{graphicx}
\usepackage{subfigure}
\usepackage{booktabs} % for professional tables

% hyperref makes hyperlinks in the resulting PDF.
% If your build breaks (sometimes temporarily if a hyperlink spans a page)
% please comment out the following usepackage line and replace
% \usepackage{icml2024} with \usepackage[nohyperref]{icml2024} above.
\usepackage{hyperref}

% Attempt to make hyperref and algorithmic work together better:

\usepackage{amsmath}
\usepackage{amssymb}
\usepackage{mathtools}
\usepackage{amsthm}
\usepackage{hyperref}
\usepackage{url}
\usepackage{mathrsfs}
\usepackage{stfloats}
\usepackage{bbm}
\usepackage{xcolor}
\usepackage{array}
\usepackage{diagbox}   
\usepackage{colortbl}  
\usepackage{multirow}  
\usepackage{booktabs}  
\usepackage{wrapfig}

% if you use cleveref..
\usepackage[capitalize,noabbrev]{cleveref}

\crefname{equation}{eq.}{eqs.}     % 单数和复数的小写格式
\Crefname{equation}{Eq.}{Eqs.}     % 单数和复数的首字母大写格式

\definecolor{darkblue}{rgb}{0, 0, 0.5}
\hypersetup{colorlinks=true, citecolor=darkblue, linkcolor=darkblue, urlcolor=darkblue}

% \newcommand{\modelname}[0]{ModelName}

%%%%%%%%%%%%%%%%%%%%%%%%%%%%%%%%
% THEOREMS
%%%%%%%%%%%%%%%%%%%%%%%%%%%%%%%%
\theoremstyle{plain}
\newtheorem{theorem}{Theorem}[section]

\newtheorem{corollary}[theorem]{Corollary}
\theoremstyle{definition}

\newtheorem{assumption}[theorem]{Assumption}
\theoremstyle{remark}

% Todonotes is useful during development; simply uncomment the next line
%    and comment out the line below the next line to turn off comments
%\usepackage[disable,textsize=tiny]{todonotes}
\usepackage[textsize=tiny]{todonotes}

\author{Yongyi Yang$^{1,3}$, Jianyang Gao$^{2}$, Wei Hu$^{1}$\\
$^1$ Computer Science and Engineering, University of Michigan, Ann Arbor, MI, USA\\
$^2$ College of Computing and Data Science, Nanyang Technological University, Singapore\\
$^3$ Physics of Artificial Intelligence Group, NTT Research, Inc., Sunnyvale, CA, USA\\
\texttt{\{yongyi}, \texttt{vvh\}@umich.edu},  \texttt{jianyang.gao@ntu.edu.sg}
}

\title{RaanA: A Fast, Flexible, and Data-Efficient Post-Training Quantization Algorithm}

\begin{document}

\maketitle

\begin{abstract}

Post-training Quantization (PTQ) has become a widely used technique for improving inference efficiency of large language models (LLMs). However, existing PTQ methods generally suffer from crucial limitations such as heavy calibration data requirements and inflexible choice of target number of bits. In this paper, we propose RaanA, a unified PTQ framework that overcomes these challenges by introducing two novel components: 1) RaBitQ-H, a variant of a randomized vector quantization method RaBitQ, designed for fast, accurate, and highly efficient quantization; and 2) AllocateBits, an algorithm that optimally allocates bit-widths across layers based on their quantization sensitivity. RaanA achieves competitive performance with state-of-the-art quantization methods while being extremely fast, requiring minimal calibration data, and enabling flexible bit allocation. Extensive experiments demonstrate RaanA's efficacy in balancing efficiency and accuracy. The code is publicly available at \url{https://github.com/FFTYYY/RaanA} .

\end{abstract}

\section{Introduction}
\label{sec:introduction}

The rapid advancement in deep learning has demonstrated that increasing the size of neural networks (NNs) often leads to significant improvements in performance across various tasks \citep{scaling-law,dalle3,gpt4,llama3}. Large language models (LLMs), such as GPT \citep{gpt3,gpt4} and LLaMA \citep{llama2,llama3}, have exhibited remarkable capabilities, but their deployment remains a challenge due to substantial computational and memory requirements. Among the various strategies developed to improve LLM inference, Post-Training Quantization (PTQ) has emerged as a widely adopted approach for optimizing the memory usage and inference speed of LLMs, balancing between accuracy and efficiency \citep{OBC,OPTQ,AWQ,APTQ,OWQ,quip}. The essential idea of PTQ is to reduce the numerical precision of the parameters of a trained model, replacing high precision floating-point representations with lower precision alternatives, which in turn reduces memory consumption and alleviates memory bandwidth constraints.

Optimal Brain Quantizer (OBQ) is one of the earlier influential methods in PTQ. It formulates the quantization problem as an optimization task, where the core objective is to minimize the layer-wise quantization error $\widehat {\boldsymbol{W}}^{(\ell)*} = \arg\min_{\widehat {\boldsymbol{W}} \in \mathcal C}\left\| {\boldsymbol{X}}^{(\ell)} {\boldsymbol{W}}^{(\ell)} - {\boldsymbol{X}}^{(\ell)} \widehat {\boldsymbol{W}} \right\|_\mathcal F^2$, where ${\boldsymbol{X}}^{(\ell)}$, ${\boldsymbol{W}}^{(\ell)}$ and $\widehat {\boldsymbol{W}}$ are the input feature, original weight matrix and quantized weight matrix of layer $\ell$ (a linear transformation layer) respectively. This approach has led to significant improvements in quantization accuracy by optimizing the quantization parameters (rescale, zero-point etc.) according to layer-wise behavior approximation. OBQ has become a foundation of PTQ research, with many recent methods building upon its framework \citep{OPTQ,OWQ,AWQ,quip}. Despite their effectiveness, existing approaches based on the OBQ framework typically suffer from notable limitations:
\begin{itemize}
    \item 
First, the OBQ framework only controls the error of the output of each layer and does not account for the overall error in the model output; more importantly, it treats all layers identically, whereas in practice, their importance can vary: the error in the first layer propagates through all subsequent layers, while that in the last layer affects only itself. Omitting this hierarchy in error sensitivity can lead to suboptimal quantization strategies. 
\item Second, the optimal solution of the OBQ framework highly depends on the input ${\boldsymbol{X}}^{(\ell)}$ which, however, is not constant across batches, and as a result, all the quantization methods based on this framework require a lot of calibration data to accurately estimate the input features (more precisely, the so-called layer-wise Hessian ${\boldsymbol{X}}^{(\ell)\top} {\boldsymbol{X}}^{(\ell)}$) \citep{OBS}. This high requirement of data not only limits the efficiency of quantization, but also the generalization performance of the method, as it can perform worse if the data distribution is far from its calibration data distribution. 
\item Moreover, the OBQ framework directly replaces each weight matrix with a quantized weight matrix. Since we are only concerned with the output of a layer rather than its internal structure, there is potential for greater flexibility by allowing additional operations within a layer.
\end{itemize}

The above-mentioned limitations are not exclusive to OBQ-based methods, but also shared by many other PTQ methods. For example, Norm-Tweaking \citep{norm-tweaking}, although only a plugin for adjusting LayerNorm parameters, also highly relies on calibration data, and EasyQuant \citep{easyquant}, although it does not require calibration, only controls layer-wise error and is limited to entry-wise float-point rounding. In this paper, we propose a novel PTQ framework that overcomes the above issue, based on the following ideas.

\paragraph{RaBitQ-H: Adopting Efficient Vector Quantization Methods to LLM Quantization}~ If we don't stick to preserving the structure of the original matrix multiplication and embrace a broader class of operations, it is possible to bring in much stronger tools from vector quantization. Specifically, in this paper we adopt RaBitQ \citep{rabitq,extended-rabitq}, a state-of-the-art vector quantization algorithm that enjoys very fast computation and well-controlled error-bound. However, RaBitQ is originally designed to solve approximate nearest neighbor search (ANNS) and can not be directly adopted to LLM quantization (see the detailed explanation in \cref{sec:varied-extended-rabitq}). To address this issue, in this paper we propose RaBitQ-H, a variant of RaBitQ, to address the challenges of adopting RaBitQ to LLM quantization.

\paragraph{AllocateBits: Allocating Bits Unevenly Across Layers}~ As mentioned before, it is suboptimal to uniformly assign computational resources across all layers, as some layers can be more important than others. There has been a research direction called Mixed Precision Quantization (MPQ), which allows different numbers of bits used in each layer \citep{pandey2023practical,APTQ,entropy-mixed-precision}. However, existing MPQ methods are mostly based on heuristics and are typically limited to binary choices of bit-widths (e.g. 4-bits v.s. 8-bits), making them not sufficiently flexible. In this paper, we propose a simple and effective approach to estimate the importance of each layer and formulate the bit allocation task as an integer programming problem that allows arbitrary bit-width choices. By solving this integer program, we obtain an optimal bit allocation strategy across layers. 

In this paper, we propose \textbf{RaanA}: a novel PTQ framework combining \textbf{\underline{Ra}BitQ-H \underline{an}d \underline{A}llocateBits}. RaanA enjoys the following superiority: 
\begin{enumerate}
    \item \underline{Extremely Fast and Device-Independent}: Unlike other state-of-the-art PTQ methods that generally require hours or even days to perform on large models, RaanA generally only requires tens of minutes on 70b models. Additionally, RaanA is largely device-independent, with most of its computation can be efficiently performed on CPUs, reducing its reliance on GPU devices.
    \item \underline{Minimal Calibration Required}: RaanA requires only a few or even zero samples for calibration, as opposed to most existing methods that typically require a huge amount of data. 
    \item \underline{Strong Performance}: RaanA performs comparably to state-of-the-art PTQ methods. Notably, it remains effective even at extremely low bit-widths (e.g., $<3$ average bits), a regime where many other methods struggle.
    \item \underline{High Flexibility}: RaanA allows any choice of target average number of bits, enabling more flexible and and fine-grained quantization choices. 
\end{enumerate}

\paragraph{Paper Structure}

In \cref{sec:background} we introduce some background on the methods we use and related work. \cref{sec:preliminary,sec:error-estimation-and-bits-assignment,sec:varied-extended-rabitq} describe RaanA in detail: \cref{sec:overall-framework} presents the preliminaries and overall framework; \cref{sec:error-estimation-and-bits-assignment} explains how to estimate the sensitivity of each layer and solve the bit allocation problem; and \cref{sec:varied-extended-rabitq} introduces the RaBitQ-H algorithm. In \cref{sec:experiments}, we present our experimental results. Finally, in \cref{sec:discussion-and-conclusion}, we conclude the paper and discuss potential future directions.

\vspace{-0.7em}
\section{Background and Related Work}\label{sec:background}
\vspace{-0.7em}

In this section, we introduce some of the related works as well as the background of methods we use in this paper. 

\vspace{-0.7em}
\paragraph{Weight Quantization in Post-training Quantization} Among many methods that aims to make large language models more efficient, PTQ targets at improving the efficiency in inference-time. In this paper we especially focus on weight quantization. The problem of weight quantization in PTQ can be described as follows: given a pre-trained large language model, find another model who requires much less bits to store such that it approximates the behavior of the original model. OBQ \citep{OBC} is one of the early works for PTQ. As mentioned in \cref{sec:introduction}, many existing PTQ methods follows the framework of OBQ and is devoted to better or faster solve the layer-wise OBQ problem \citep{OPTQ,OWQ,AWQ,OmniQ,quip,quip-sharp}. As we mentioned in \cref{sec:introduction}, most of these methods highly relies on performing float-point rounding and estimating the layer-wise Hessian.

% \paragraph{Mixed Precision Quantization} As a special case of PTQ, Mixed Precision Quantization (MPQ) allows different number of bits in different layers. Existing MPQ methods These methods usually uses a heuristic-based metric to measure the importance of each layer \citep{???}. Moreover, they follows a binary decision scheme: on top of a universal precision (e.g. 4bits), a fraction of layers which are of lower importance are selected to use lower precision (e.g. 2bits). . For example, APTQ \citep{APTQ} uses Hessian trace and EWQ \citep{entropy-mixed-precision} uses entropy distribution as their sensitivity measure. When the number of bits candidates are multiple, solving the bits-assigning problem is not straight-forward. \cite{pandey2023practical} uses multiple bits-candidates, but solved the bits-assigning problem with a greedy algorithm, which can be suboptimal. 

\vspace{-0.7em}
\paragraph{Vector Quantization for Approximate Nearest Neighbor Search}There have been a vast literature of vector quantization in ANNS. The classical scalar quantization (SQ) and its variant LVQ \citep{lvq} take finite uniform grids as codebooks and independently round every coordinate to an unsigned integer. These methods support to compute inner product based on arithmetics of unsigned integers without decompression. However, they failed to achieve reasonable accuracy using low-bit quantization ($<4$ bits).
PQ \citep{pq} and OPQ \citep{ge2013optimized} construct a codebook via KMeans clustering and quantizes vectors by mapping them to the nearest vector in the codebook. With clustering, these methods better capture the distributions of a dataset and provide better accuracy using low-bit quantization. However, their computation of inner product replies on looking up tables, which is costly in modern systems. 
RaBitQ \citep{rabitq} and its multi-bit extension \citep{extended-rabitq} construct a codebook by translating, normalizing, and randomly rotating finite uniform grids. They achieve superior accuracy while enabling efficient computation. Theoretically, they provide an unbiased estimator with asymptotically optimal error bounds for the space-accuracy trade-off in inner product estimation between unit vectors. See \cref{sec:intro-rabitq} for more details.
\vspace{-0.7em}
\paragraph{(Randomized) Hadamard Transformation} In this paper, one critical technique we used is Randomized Hadamard Transformation (RHT), which is a randomized version of Hadamard Transformation and can be viewed as an approximation of random rotation \citep{SRHT}. Hadamard Transformation, in our context, is a specific class of orthonormal matrix whose matrix multiplication that can be fast computed, and is used in many quantization works \citep{quip-sharp,hada-quarot,hada-spinquant,hadacore} and other areas. Due to space limitation, we defer a more detailed introduction of RHT to \cref{sec:intro-rht}.

\vspace{-0.5em}
\section{Preliminaries}\label{sec:preliminary}
\vspace{-0.5em}

Throughout this paper, we use bold lowercase letters to represent vectors (e.g. ${\boldsymbol{x}}$) and bold uppercase letters to represent matrices (e.g. ${\boldsymbol{A}}$), and we use the corresponding unbold letters with subscript to represent the entries of vectors or matrices (e.g. $x_{i}$ is the $i$-th entry of vector ${\boldsymbol{x}}$).

In this paper, we fix a trained neural network $f: \mathbb R^{n \times d_0} \times \mathbb R^m \to \mathbb R$ as the object of study, where $d_0$ is the dimensionality of the input and $m$ is the number of learnable parameters\footnote{A neural network with multi-dimensional output can be viewed as several neural networks with scalar output, and therefore we only consider NNs with scalar output.}. We use ${\boldsymbol{x}} \in \mathbb R^{n \times d_0}$ and ${\boldsymbol{\theta}}$ to denote the input data and the collection of all parameters of $f$, respectively, where $n$ is the token number\footnote{$n$ can also be understood as the batch size, since we only consider linear layers that do not involve any cross-token interaction.}.

A NN model typically consists of multiple linear transformation blocks where the input is multiplied by a weight matrix. These include, for example, feed-forward layers as well as the Q, K, and V transformation layers in Transformer architectures \citep{transformer}. We assume the model contains a total of $L$ linear layers and denote the input features, parameter matrix, and output features of the $k$-th linear layer by ${\boldsymbol{X}} ^{(k)} \in \mathbb R^{n \times d_k}$, ${\boldsymbol{W}}^{(k)} \in \mathbb R^{d_k\times c_k}$, and ${\boldsymbol{H}}^{(k)} = {\boldsymbol{X}}^{(k)} {\boldsymbol{W}}^{(k)} \in \mathbb R^{n \times c_k}$, respectively, where $d_k$ and $c_k$ are the input and output dimensions of the $k$-th linear layer. Note that ${\boldsymbol{X}}^{(k)}$ is principally a function of ${\boldsymbol{x}}$, and ${\boldsymbol{W}}^{(k)}$ is a function of ${\boldsymbol{\theta}}$, but we omit the arguments ${\boldsymbol{x}}$ and ${\boldsymbol{\theta}}$ when they are clear from context.

We use ${\boldsymbol{x}}^{(k)}_i \in \mathbb R^d$ to represent the $i$-th row of ${\boldsymbol{X}}^{(k)}$ and ${\boldsymbol{w}}^{(k)}i \in \mathbb R^d$ to represent the $i$-th column of ${\boldsymbol{W}}^{(k)}$. The $(i,j)$-th entry of ${\boldsymbol{H}}^{(k)}$ is given by $h^{(k)}{i,j} = \left<{\boldsymbol{x}}^{(k)}_i, {\boldsymbol{w}}^{(k)}_j\right>$.

\vspace{-0.5em}
\subsection{Overall Framework}\label{sec:overall-framework}
\vspace{-0.5em}

We first present the overall framework of RaanA in \cref{alg:overall}. It takes a trained model, calibration data and desired quantization parameters as input and output quantized weight matrix together with other information used for de-quantization (see \cref{alg:inference} for the de-quantization algorithm). The algorithm consists of two parts: determining each layer's target bit-width, and performing quantization. In \cref{alg:overall} we use the error estimator $\mathsf{Err}_k$ and vector quantization algorithm $\textsf{RaBitQ-H}$ as black boxes, and they will be specified in the subsequent sections. 

\begin{table}[htbp]
    \centering
\begin{algorithm}[H]
    \caption{Overall Framework of RaanA}\label{alg:overall}
    \SetAlgoLined
    \KwIn{trained model parameters ${\boldsymbol{\theta}}_0$, bit-width candidate set $\mathscr B$, overall bits budget $R$ and calibration data $\{\boldsymbol{x}_i\}_{i=1}^{n_c}$;}
    \tcc{AllocateBits}
    Solve the bits-allocation problem \begin{equation}\begin{aligned}
    \left\{b_k^*\right\}_{k=1}^L =  \arg\min_{\{b_k\}_{k=1}^L} & \frac{1}{n_c}\sum_{k=1}^L \sum_{i=1}^{n_c} \mathsf{Err}_k\left( b_k; \boldsymbol{x}_i\right) \label{eq:bits-assigning-problem}
    \\ \text{s.t.}\quad \sum_{k=1}^L b_km_k & \leq R \text{ and } b_k \in \mathscr B, \forall k \in [L]; 
    \end{aligned}\end{equation}
    
    \tcc{Quantization}
    Calculate $\left(\widehat {\boldsymbol{W}}^{(k)}, {\boldsymbol{r}}^{(k)},{\boldsymbol{D}}^{(k)}\right) = \mathop{ \textsf{ RaBitQ-H } }\left({\boldsymbol{W}}^{(k)}, b_k^*\right)$ for $k= 1,2,\cdots L$\;

    \textbf{Return:} $\left\{ \left(\widehat {\boldsymbol{W}}^{(k)},{\boldsymbol{r}}^{(k)}, {\boldsymbol{D}}^{(k)}\right) \right\}_{k=1}^L$.
\end{algorithm}
    \caption*{\textbf{\cref{alg:overall}: The overall framework of RaanA.} $\mathsf{Err}_k(b; {\boldsymbol{x}})$ is an estimation of the overall error brought by the $k$-th linear layer with $b$-bit quantization, estimated at point ${\boldsymbol{x}}$, and $\textsf{RaBitQ-H}$ is the RaBitQ-H quantization algorithm. $m_k = d_kc_k$ is the number of parameters at layer $k$; $n_c$ is the number of calibration data; $\mathscr B$ is a set containing all desired choice of layer-wise bit-width (e.g. $\mathscr B = \{1,2,\cdots 8\}$), and $R$ is the desired total number of bits used (i.e. bits per parameter times total number of weight parameters). }
\end{table}

\vspace{-0.5em}
\section{AllocateBits: Error Estimation and Bits Allocation}\label{sec:error-estimation-and-bits-assignment}
\vspace{-0.5em}

In this section, we discuss how to calculate $\mathsf{Err}_k(b, {\boldsymbol{x}})$ and how to solve the bits-allocation problem \cref{eq:bits-assigning-problem}. We begin by defining this quantity. In the model $f$, if we replace the parameters of the $k$-th layer, ${\boldsymbol{W}}^{(k)}$, with a $b$-bit quantized version, this leads to an error in the output ${\boldsymbol{H}}^{(k)}$. We denote this error by ${\boldsymbol{\epsilon}}^{(k)}(b) \in \mathbb R^{n \times c}$. Consequently, ${\boldsymbol{\epsilon}}^{(k)}(b)$ leads to an error in the overall model output $f$. We define $\mathsf{Err}_k\left(b, {\boldsymbol{x}}\right)$ as the absolute difference between the model outputs before and after quantization at layer $k$ with $b$ bits.

We first state a property of ${\boldsymbol{\epsilon}}^{(k)}(b)$ under RaBitQ-H, using \cref{ass:subgaussian-error}, which is justified by the analysis of RaBitQ (see \cref{sec:intro-rabitq} for more details).

\begin{assumption}\label{ass:subgaussian-error} There exists a constant $K > 0$, such that for any $k \in [L], i \in [n], j \in [c_k]$, 
    ${\boldsymbol{\epsilon}}^{(k)}(b) \in \mathbb R^{d_k}$ is a random vector such that 
    \begin{align}
     {\boldsymbol{\epsilon}}_{i,j}^{(k)}(b) \lesssim 2^{-b} \left\|{\boldsymbol{x}}^{(k)}_i\right\| \left\|{\boldsymbol{w}}_j^{(k)}\right\|
    \end{align}
    with high probability, 
    where we use $\lesssim$ to hide constant terms.
\end{assumption}

When \cref{ass:subgaussian-error} holds, \cref{cor:error-estimation} is a direct corollary of \cref{thm:estimate-error-by-jacobian}, which we prove in the appendix.

\begin{corollary}[Informal]\label{cor:error-estimation}
Fix the model input ${\boldsymbol{x}}$ and parameter ${\boldsymbol{\theta}}$, and suppose ${\boldsymbol{\epsilon}} = {\boldsymbol{\epsilon}}^{(k)}(b)$ satisfies \Cref{ass:subgaussian-error}, then the following statement holds with probability at least $0.99$: {\small \begin{align} \label{eq:error-bound-k}
 \mathsf{Err}_k\left(b,{\boldsymbol{x}}\right) \lesssim 2^{-b} \sqrt{ \frac{\log c_k}{d_k} } \left\| \left.\frac{\partial  f({\boldsymbol{\theta}},{\boldsymbol{x}})}{\partial {\boldsymbol{H}}^{(k)}}\right|_{{\boldsymbol{x}} = {\boldsymbol{x}}\atop  {\boldsymbol{\theta}} = {\boldsymbol{\theta}}}\right\|_\mathcal F \left\| {\boldsymbol{X}}^{(k)}\right\|_\mathcal F \left\|{\boldsymbol{W}}^{(k)}\right\|_\mathcal F,
\end{align}}
where we use $\lesssim$ to hide constant coefficients and small $O(1/d)$ terms.
\end{corollary}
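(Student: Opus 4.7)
The plan is to chain two ingredients: \cref{thm:estimate-error-by-jacobian}, which provides a first-order Taylor expansion of the model output around the unquantized forward pass, and the sub-Gaussian structure of the per-entry quantization error encoded by \cref{ass:subgaussian-error}. From the first ingredient, to leading order,
\begin{equation*}
\mathsf{Err}_k(b,{\boldsymbol{x}}) \;\lesssim\; \left| \left\langle \frac{\partial f}{\partial {\boldsymbol{H}}^{(k)}},\; {\boldsymbol{\epsilon}}^{(k)}(b) \right\rangle \right|,
\end{equation*}
with a second-order Taylor remainder of order $\|{\boldsymbol{\epsilon}}^{(k)}(b)\|_\mathcal F^2 = O(4^{-b})$ that is absorbed into the $O(1/d)$ slack advertised in the statement. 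Writing $G := \partial f/\partial {\boldsymbol{H}}^{(k)}$, the task reduces to a high-probability upper bound on $\bigl|\langle G, {\boldsymbol{\epsilon}}^{(k)}(b)\rangle\bigr|$.

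The key structural point is that RaBitQ-H quantizes each column of ${\boldsymbol{W}}^{(k)}$ independently, so the columns of ${\boldsymbol{\epsilon}}^{(k)}(b)$ are mutually independent across $j \in [c_k]$, and each is a linear functional in the column-wise quantization error: writing $G_{\cdot, j}$ for the $j$-th column of $G$ and setting $\boldsymbol{u}_j := ({\boldsymbol{X}}^{(k)})^\top G_{\cdot, j}$, one has $\sum_i G_{i,j}\,\epsilon^{(k)}_{i,j}(b) = \langle \boldsymbol{u}_j,\,\hat{{\boldsymbol{w}}}^{(k)}_j - {\boldsymbol{w}}^{(k)}_j\rangle$. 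The sub-Gaussian tail that underpins \cref{ass:subgaussian-error} (from RaBitQ-H's variance-proxy analysis in \cref{sec:intro-rabitq}) implies this scalar is sub-Gaussian with standard deviation of order $(2^{-b}/\sqrt{d_k})\,\|\boldsymbol{u}_j\|\,\|{\boldsymbol{w}}^{(k)}_j\|$.

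Applying a sub-Gaussian tail bound at failure level $0.01/c_k$ to each column and union-bounding over the $c_k$ columns gives, with overall probability at least $0.99$,
\begin{equation*}
\left|\sum_i G_{i,j}\,\epsilon^{(k)}_{i,j}(b)\right| \;\lesssim\; 2^{-b} \sqrt{\frac{\log c_k}{d_k}}\, \|\boldsymbol{u}_j\|\,\|{\boldsymbol{w}}^{(k)}_j\| \qquad \text{for all } j \in [c_k].
\end{equation*}
Summing over $j$ and applying Cauchy--Schwarz produces $\sum_j \|\boldsymbol{u}_j\|\,\|{\boldsymbol{w}}^{(k)}_j\| \leq \|\boldsymbol{U}\|_\mathcal F\,\|{\boldsymbol{W}}^{(k)}\|_\mathcal F$ with $\boldsymbol{U} = ({\boldsymbol{X}}^{(k)})^\top G$; Frobenius submultiplicativity $\|AB\|_\mathcal F \le \|A\|_\mathcal F\,\|B\|_\mathcal F$ then gives $\|\boldsymbol{U}\|_\mathcal F \leq \|{\boldsymbol{X}}^{(k)}\|_\mathcal F\,\|G\|_\mathcal F$. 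Chaining the three bounds yields exactly \cref{eq:error-bound-k}.

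The main obstacle is the step from \cref{ass:subgaussian-error} (an entrywise high-probability statement) to a genuine sub-Gaussian tail for the weighted column sum with variance proxy scaling as $4^{-b}/d_k$. A naive entrywise application would miss the critical $\sqrt{\log c_k/d_k}$ factor and only yield $2^{-b}\,\|{\boldsymbol{X}}^{(k)}\|_\mathcal F\,\|{\boldsymbol{W}}^{(k)}\|_\mathcal F\,\|G\|_\mathcal F$. Recovering the improvement requires invoking the underlying RaBitQ-H guarantee that its inner-product estimator is unbiased with variance of order $4^{-b}/d_k$ times the product of squared vector norms, rather than using \cref{ass:subgaussian-error} purely as a black box: the $1/\sqrt{d_k}$ factor originates from this variance proxy and the $\sqrt{\log c_k}$ from the column-wise union bound. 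The remaining steps (Taylor truncation, union bound, Cauchy--Schwarz, Frobenius submultiplicativity) are routine.
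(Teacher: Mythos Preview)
Your argument is correct, but it takes a genuinely different route from the paper's.

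The paper does not go through the column-wise identity $\sum_i G_{i,j}\,\epsilon_{i,j}=\langle \boldsymbol{u}_j,\hat{\boldsymbol{w}}_j-\boldsymbol{w}_j\rangle$ at all. Instead, it first replaces the informal \cref{ass:subgaussian-error} by a \emph{formal} entrywise sub-Gaussian tail (stated in the appendix) that already carries the $\sqrt{d_k}$:
\[
\mathbb P\!\left\{|\epsilon_{i,j}|>t\right\}\le 2\exp\!\left[-K\!\left(\frac{t\sqrt{d_k}}{2^{-b}\|\boldsymbol{x}_i\|\,\|\boldsymbol{w}_j\|}\right)^{\!2}\right].
\]
With this in hand, the proof is a three-line computation: union bound over all $(i,j)$ at threshold $t_0\asymp 2^{-b}\sqrt{\log c_k/d_k}$, deduce $\|\boldsymbol{\epsilon}\|_\mathcal F\lesssim 2^{-b}\sqrt{\log c_k/d_k}\,\|\boldsymbol{X}^{(k)}\|_\mathcal F\|\boldsymbol{W}^{(k)}\|_\mathcal F$ (since $\sum_{i,j}\|\boldsymbol{x}_i\|^2\|\boldsymbol{w}_j\|^2=\|\boldsymbol{X}^{(k)}\|_\mathcal F^2\|\boldsymbol{W}^{(k)}\|_\mathcal F^2$), then Taylor plus Cauchy--Schwarz on $\langle\nabla g,\boldsymbol{\epsilon}\rangle\le\|\nabla g\|_\mathcal F\|\boldsymbol{\epsilon}\|_\mathcal F$.

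So your diagnosis that ``a naive entrywise application would miss the $\sqrt{\log c_k/d_k}$ factor'' is right about the \emph{informal} assumption but not about what the paper actually does: the paper simply strengthens the entrywise hypothesis and then the naive route goes through. Your route instead keeps the weaker hypothesis and recovers the $1/\sqrt{d_k}$ by applying the RaBitQ inner-product guarantee to the synthetic query $\boldsymbol{u}_j=(\boldsymbol{X}^{(k)})^\top G_{\cdot,j}$ (this works because the estimator is linear in the query), then union-bounds only over the $c_k$ columns. Both land on the same bound; the paper's version is shorter once the formal assumption is granted, while yours is closer to the source guarantee and needs a union bound over only $c_k$ rather than $nc_k$ events. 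The column-wise independence you mention is not actually used in your argument---the union bound only needs the marginal tails---so you can drop that remark.
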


Denote $\alpha_k = \sqrt{ \frac{\log c_k}{d_k}}\left\| \left.\frac{\partial  f({\boldsymbol{\theta}},{\boldsymbol{x}})}{\partial {\boldsymbol{H}}^{(k)}}\right|_{{\boldsymbol{x}} = {\boldsymbol{x}}\atop  {\boldsymbol{\theta}} = {\boldsymbol{\theta}}}\right\|_\mathcal F \left\| {\boldsymbol{X}}^{(k)}\right\|_\mathcal F \left\|{\boldsymbol{W}}^{(k)}\right\|_\mathcal F$ be the coefficient in \cref{cor:error-estimation}, we estimate the error introduced by $b$-bit quantization at the $k$-th layer by $\alpha_k 2^{-b_k} $. Then the bits-allocation problem \cref{eq:bits-assigning-problem} can be rewritten as \begin{equation}\begin{aligned}
\{b_k^*\}_{k=1}^L =  \arg\min_{\{b_k\}_{k=1}^L} & \sum_{k=1}^L \alpha_k 2^{-b_k}   \label{eq:practical-bits-assigning-problem}
\\ \text{s.t. }~  \sum_{k=1}^L b_k m_k & \leq R, 
\\  b_k & \in \mathscr B, \forall k \in [L].
\end{aligned}\end{equation}

\vspace{-0.7em}
\subsection{Solving the Bits-Allocation Problem}
\vspace{-0.7em}

Now we consider solving problem \cref{eq:practical-bits-assigning-problem}. At first glance, this is a nonlinear integer programming problem which is known to be NP-complete \citep{np-completeness}. Nevertheless, we note that the problem size is manageable in practice so that it can be efficiently solved via dynamic programming.
Let $g = \mathop{ \mathsf{ gcd } }\left(m_1,\cdots m_L, R\right)$ be the greatest common divisor (GCD) of all $m_k$'s and $R$, then we have \begin{align}
\sum_{k=1}^L b_k m_k \leq R \iff \sum_{k=1}^L b_k \frac{m_k}{g} \leq \frac{R}{g},
\end{align}
and thus the total bits budget can be reduced to $R / g$. Thanks to the design choice of most large language models, whose hidden sizes are usually powers of $2$, in practice $g$ is typically very large. As a result, the size of this problem can be prominently reduced to a scale where finding the global optimum via a dynamic programming algorithm becomes feasible. We provide the algorithm for solving this problem in \cref{sec:bits-assigning-detail}.

\cref{alg:bits-assigning} from \cref{sec:bits-assigning-detail} runs in $O\left(L |\mathscr B| R / g\right)$ time. In practice, both $L$ and $|\mathscr B|$ are less than $100$, $\frac{R}{g}$ is on the order of $10^5$ and the entire process can be completed in a few seconds on CPU. For LLaMA models, the value of $g$ is on the order of $10^6$, making the divide-by-GCD trick -- although seemingly simple -- crucial for efficiency; without it, the algorithm would be millions of times slower.

\vspace{-0.7em}
\subsection{Few-shot and Zero-shot Calibration}\label{sec:few-shot-and-zero-shot}
\vspace{-0.7em}

In \cref{alg:overall}, a calibration set $\left\{{\boldsymbol{x}}_i\right\}_{i=1}^{n_c}$ is used to estimate the error (i.e. $\alpha_k$ in \cref{eq:practical-bits-assigning-problem}). In principle, we can certainly use a large amount of data to obtain a better estimator of $\alpha_k$. However, in RaanA, $\alpha_k$ only depends on the norm of input and the Jacobian of the overall output w.r.t. the output of layer $k$. Unlike the calibration in OBQ-based methods that requires estimating the layer-wise Hessian, these values here are practically very stable and can be estimated using a very small amount of data. This has also been observed in some previous work -- for example, \cite{lipschitz} has found that the empirical mean of Jacobian quickly converges to the Lipschitz constant of the model. 

In our implementation, we consider two settings: \textbf{few-shot calibration} and \textbf{zero-shot calibration}. In few-shot setting, we use 5 samples from the training set of the corresponding dataset, which is significantly less than what mainstream methods use (e.g. Quip\# uses more than 6000 samples  \citep{quip-sharp}). In the zero-shot calibration setting, we only use one synthetic sentence to estimate $\alpha_k$, without resorting to any actual training data. Specifically, we repeat the following sentence 100 times to form our single calibration data point in the zero-shot setting \footnote{This sentence was suggested by ChatGPT.}:
\begin{quote}
    ``The curious fox leaped over the quiet stream, its reflection rippling in the golden afternoon light.''
\end{quote}

\vspace{-0.5em}
\section{RaBitQ-H: A variant of RaBitQ with Randomized Hadamard Transformation}\label{sec:varied-extended-rabitq}
\vspace{-0.5em}

In \cite{extended-rabitq}, the authors introduced RaBitQ, a universal multi-bit vector quantization algorithm that preserves the results of vector inner products and achieves an asymptotically optimal error rate. In large language models, the basic operation and bottleneck is matrix multiplication (MM), which can also be viewed as performing multiple inner products. Therefore, it is possible to adopt RaBitQ as our quantization method. RaBitQ has the appealing property that the error rate is controlled in all cases with high probability and does not require handling outliers (see details in \cref{sec:intro-rabitq}). This ensures the desired property stated in \cref{ass:subgaussian-error}.

However, directly applying RaBitQ in the MM scenario can be inefficient. In RaBitQ, it requires applying a random rotation to each vector during the pre-processing stage. For a $d$-dimensional matrix, applying a random rotation takes $O(d^2)$ time. This is acceptable in the original ANNS scenario, where the data dimension is much smaller than the number of data points, making the rotation cost negligible. In contrast, in the context of large language models and MM, the number and dimensionality of the vectors are comparable\footnote{$c_k$ is the number of vectors and $d_k$ is the vector dimension.}, making the cost of performing or storing a random rotation comparable to that of the original MM, which negates the potential benefits of quantization.
\begin{table}[th]
    \centering
    \begin{minipage}[t]{0.48\linewidth}
        \begin{algorithm}[H]
            \caption{Preprocess: Quantization}\label{alg:quantization}
            \SetAlgoLined
            \KwIn{weight matrix ${\boldsymbol{W}}^{(k)} \in \mathbb R^{d_k \times c_k}$, desired number of bits $B \in \mathscr B$;}
            $\boldsymbol{\xi} \leftarrow \{\xi_i\}_{i=1}^{d_k}$ where $\xi_i$ is sampled i.i.d. from the Rademacher distribution\;
            \vspace{0.4em}
            $ {\boldsymbol{D}}^{(k)} \leftarrow \mathop{ \mathrm{ diag } }\left({\boldsymbol{\xi}}\right)$\;
            ${\boldsymbol{W}}' \leftarrow \mathsf{Hadamard}\left( {\boldsymbol{D}}^{(k)}  {\boldsymbol{W}} \right)$\;
            $\widehat {\boldsymbol{W}}^{(k)}, {\boldsymbol{r}}^{(k)} \leftarrow \mathsf{RaBitQ}\left({\boldsymbol{W}}',  B\right)$ \;
            \textbf{Return:} $\widehat {\boldsymbol{W}}^{(k)}$,  ${\boldsymbol{r}}^{(k)}$, ${\boldsymbol{D}}^{(k)}$.
        \end{algorithm}
    \end{minipage}
    \hfill
    \begin{minipage}[t]{0.48\linewidth}
        \begin{algorithm}[H]
            \caption{Inference: Matrix Multiplication Estimation}\label{alg:inference}
            \SetAlgoLined
            \KwIn{input features ${\boldsymbol{X}}^{(k)} \in \mathbb R^{n \times d_k}$, quantized weight matrix $\widehat {\boldsymbol{W}}$, rescale factor ${\boldsymbol{r}}^{(k)} \in \mathbb R^{c_k}$, Rademacher samples ${\boldsymbol{D}}^{(k)} \in \mathbb R^{d_k}$, desired number of bits $B \in \mathscr B$;}
            ${\boldsymbol{X}} \leftarrow  \mathsf{Hadamard}\left( {\boldsymbol{D}}^{(k)}{\boldsymbol{X}}^{(k)\top}\right)^\top$\;
            ${\boldsymbol{z}} \leftarrow \frac{2^{ B} - 1}{ 2}{\boldsymbol{X}}{\boldsymbol{1}}$ \;
            ${\boldsymbol{Y}} \leftarrow {\boldsymbol{X}}  \widehat {\boldsymbol{W}}^{(k)} {\boldsymbol{1}} {\boldsymbol{r}}^\top - {\boldsymbol{z}} {\boldsymbol{r}}^\top$ \;
            \textbf{Return:} ${\boldsymbol{Y}}$.
        \end{algorithm}
    \end{minipage}

    \caption*{\cref{alg:quantization,alg:inference}: \textbf{Quantization and Inference algorithms of RaBitQ-H}. Here $\mathsf{Hadamard}$ refers to Hadamard transformation (see \cref{sec:intro-rht} for a formal definition) and $\mathsf{RaBitQ}$ refers to the original RaBitQ algorithm \textit{without random rotation}. Principally they are both vector operations, and here by applying them to matrices we mean applying column-wise. ${\boldsymbol{r}}^{(k)} \in \mathbb R^{c_k}$ in the algorithms is the rescale factor output by Extended RaBitQ, and ${\boldsymbol{1}}$ means a $c_k$-dimensional all-one vector.}
\end{table}
To address this issue, we replace the random rotation in the original paper with a Randomized Hadamard Transformation (RHT), which is known to approximate random rotation in many cases \citep{SRHT}. RHT has the following desired properties: 1) For each layer $k$, it only requires $d_k$ random bits, making storing the transformation not a bottleneck; 2) It can be efficiently computed in time $O((c_k+n) \log d_k)$ using existing Hadamard kernels \citep{hadacore}. We adopt the analysis from the original RaBitQ paper \citep{extended-rabitq} and confirm that the output of RaBitQ-H satisfies \cref{ass:subgaussian-error}. The complete quantization and de-quantization algorithms are provided in \cref{alg:quantization,alg:inference}.

\vspace{-0.5em}
\paragraph{Remarks on Implementation.} The practical implementation of RaBitQ-H is slightly more complicated than described here, involving two additional components. First, since the fast Hadamard transformation only applies to vector sizes that are powers of 2, we need to handle vectors whose sizes are not powers of 2. Second, we use small implementation tricks (such as input centralization) that preprocess the inputs and weights before running RaBitQ-H; these tricks do not affect the output of the quantization / de-quantization algorithm but in practice reduce the error rate. Due to space limitations, we defer these details to \cref{sec:rht-for-arbitrary-size,sec:tricks-used-in-quantization}.

\vspace{-0.5em}
\section{Experiment Results}\label{sec:experiments}
\vspace{-0.5em}

\begin{table}[tbp]
    \centering
    \begin{tabular}{c|c|ccccc}
        \toprule
        Method & Avg. bits & llama-7b & llama-13b & llama2-7b & llama2-13b & llama2-70b \\
        \midrule
        fp16 & 16 &  5.68 & 5.09 &  5.47 & 4.88 & 3.31 \\
        \midrule
        GPTQ & 2+ & 44.01 & 15.60 & 36.77 & 28.14 & - \\
        AWQ & 2+ & 2.6e5 &  2.8e5 & 2.2e5 & 1.2e5 &  - \\
        OmniQuant & 2+  & 9.72 & 7.93  & 11.06 & 8.26 & 6.55 \\
        Quip\#$^*$ & 2  & 9.95 & 7.18 & 12.3 & \textbf{7.60} & 4.87 \\
        RaanA & 2.1 & 13.70 & 8.28  & 18.31 & 51.05 & 4.81 \\
        RaanA & 2.3 & \textbf{8.53} & \textbf{6.63}  & \textbf{10.63} & 8.96 & \textbf{4.49} \\
        \midrule
        GPTQ & 3+ & 8.81 & 5.66 & 6.43 & 5.48 & 3.85 \\
        AWQ & 3+ & 6.35 & 5.52 & 6.24 & 5.32 & - \\
        OmniQuant & 3+  & 6.15 & 5.44  & 6.03 & 5.28 & 3.78 \\
        Quip\#$^*$ & 3  & 6.29 & 5.52 & 6.19 & 5.34 &  3.71\\
        RaanA & 3.1 & 6.33 & 5.53  & 6.20 & 5.48 & 3.66 \\
        RaanA & 3.3 & \textbf{6.10} & \textbf{5.38}  & \textbf{6.00} & \textbf{5.27} & \textbf{3.59} \\
        \midrule
        EasyQuant & 4+ & 6.01 & 5.29 & - & - & - \\
        GPTQ & 4+ & 6.22 & 5.23 & 5.69 & 4.98 & 3.42 \\
        AWQ & 4+ & 5.78 & 5.19 & 5.60 & 4.97 & - \\
        OmniQuant & 4+  & \textbf{5.77} & \textbf{5.17}  & \textbf{5.58} & \textbf{4.95} & \textbf{3.40} \\
        Quip\#$^*$ & 4  & 5.83 & 5.20 & 5.66 & 5.00 & 3.42 \\
        RaanA & 4.1 & 5.86 & 5.20  & 5.69 & 5.02 & 3.42 \\
        RaanA & 4.3 & 5.83 & \textbf{5.17}  & 5.65 & 4.98  & \textbf{3.40} \\
        \bottomrule
    \end{tabular}
    \caption{\textbf{Perplexity results on wikitext2}. Methods labeled `+'' in the number of bits use tricks such as grouping and keeping outliers full-precision that bring $0.1\sim 0.3$ extra bit cost. Quip\#$^*$ refers to Quip\#$_{\text{no FT \& no $E_8$}}$. For OmniQuant, GPTQ and AWQ, we compare with the version with grouping size $128$. Best performance of each category is labeled bold. }
    \label{tab:experiment-wiki}
\end{table}

In this section, we present our experimental results. We focus on the performance loss after quantization, compared to the full-precision (fp16) model. Following previous work \citep{AWQ,OmniQ}, we evaluate our method on the LLaMA model family \citep{llama2} and the Qwen3 \citep{qwen3} model with language modeling tasks. Due to limited space, we only show results on LLaMA models in the main paper and defer results with Qwen3 to the appendix.

\vspace{-0.5em}
\paragraph{Datasets.} As in previous studies \citep{AWQ,OmniQ}, we evaluate our method on the wikitext2 \citep{wikitext2} and c4 \citep{c4dataset} datasets. We split the test / validation sets into sequences of length 2048 as test samples and measure the average perplexity of the quantized model on each test sample. For c4, since the full validation set is too large and model performance converges quickly, we use 500 samples as the test set.

\vspace{-0.5em}
\paragraph{Baseline Methods.} We compare RaanA with GPTQ \citep{OPTQ}, AWQ \citep{AWQ}, OmniQuant \citep{OmniQ}, and Quip\#$_{\text{no FT \& no $E_8$}}$ \citep{quip-sharp}\footnote{The full version of Quip\# \citep{quip-sharp} fine-tunes the model after quantization. In this paper, we compare RaanA with Quip\#$_{\text{no FT \& no $E_8$}}$, the version without fine-tuning, since fine-tuning is a universal plug-in component in quantization and is orthogonal to our contribution.}. Additionally, for 4-bit quantization, we also compare with EasyQuant \citep{easyquant}, a lightweight quantization method that does not require calibration data and targets 4-bit quantization. Note that most baseline models use tricks such as grouping and keeping full-precision outliers that introduce extra bit costs\footnote{The only exception in our baselines is Quip\#$_{\text{no FT \& no $E_8$}}$, which has a precise control over the average number of bits. We admittedly require slightly more bits to match its performance. However, RaanA is significantly more lightweight than Quip\#$_{\text{no FT \& no $E_8$}}$, which uses over 6000 calibration samples to estimate the layer-wise Hessian.}, generally ranging from 0.1 to 0.3 bits. To make a fair comparison, we report RaanA performance with $x+0.1$ bits and $x+0.3$ bits for $x \in {2, 3, 4}$.

% ----- about llama3 8b -----
% https://github.com/ggml-org/llama.cpp/issues/7119
% https://github.com/AutoGPTQ/AutoGPTQ/issues/650
% https://openreview.net/attachment?id=g4TZTVz2w9&name=pdf

\vspace{-0.7em}
\subsection{Performance}
\vspace{-0.7em}

In this sub-section, we compare the performance of RaanA under few-shot calibration with baseline methods. The results on wikitext2 are reported in \cref{tab:experiment-wiki} and we defer the results on c4 to \cref{sec:extra-experiment} due to limited space. It is clear from the table that RaanA is comparable with or better than baseline methods, especially in the extreme regime of 2+ bits quantization.

\vspace{-0.75em}
\subsection{Zero-shot Calibration vs Few Shot Calibration}
\vspace{-0.75em}

In this subsection, we compare the performance between few-shot calibration and zero-shot calibration. Results for wikitext2 are displayed in \cref{tab:experiment-few-vs-zero} for results on c4 are deferred to \cref{sec:extra-experiment}. We also add results from EasyQuant, which also does not require calibration data, as a comparison. It is clear from the resutls that, although the performance does generally go down a little bit with zero-shot calibration, they are generally comparable with few-shot calibration results, validating the effectiveness of zero-shot calibration for RaanA. 

\begin{table}[htbp]
    \centering
    \begin{tabular}{c|c|ccccc}
        \toprule
        Method & Avg. bits & llama-7b & llama-13b & llama2-7b & llama2-13b & llama2-70b\\
        \midrule
        fp16 & 16 & 5.68 & 5.09 & 5.47 & 4.88 & 3.31 \\ 
        \midrule
        RaanA-few & 2.1 & 13.70 & 8.28 & 18.31 & 51.05 & 4.81 \\ 
        RaanA-zero & 2.1 & 15.50 & 10.12 & 26.13 & 13.37 & 7.89 \\ 
        \midrule 
        RaanA-few & 3.1 & 6.33 & 5.53 & 6.20 & 5.48 & 3.66\\ 
        RaanA-zero & 3.1 & 6.45 & 5.63 & 6.41 & 5.55 & 4.01 \\ 
        \midrule 
        EasyQuant & 4+ & 6.01 & 5.29 & - & - & - \\
        RaanA-few & 4.1 & 5.86 & 5.20 & 5.69 & 5.02 & 3.42 \\ 
        RaanA-zero & 4.1 & 5.86 & 5.23 & 5.73 & 5.04 & 3.50\\ 
        \bottomrule
    \end{tabular}
    \caption{\textbf{Perplexity Comparison Between Zero-shot Calibration and Few-shot Calibration on wikitext2}. RaanA-zero refers to RaanA with zero-shot calibration and RaanA-few refers to RaanA with few-shot calibration.}\label{tab:experiment-few-vs-zero}
    \vspace{-0.9em}
    \end{table}
    
    \vspace{-0.7em}
\subsection{Quantization Time}\label{sec:quantization-time}
\vspace{-1em}

\begin{wraptable}[12]{r}{0.35\textwidth} \vspace{-2em}
    \centering
    \renewcommand{\arraystretch}{1.2} 
    \begin{tabular}{l|c}
        \toprule
        Model & Time (s) \\
        \midrule
        llama2-7b & 301.74 \\
        llama2-13b & 567.61 \\
        llama2-70b &  3293.26 \\
        \bottomrule
    \end{tabular}
    \caption{\textbf{Quantization Time}. The time required to complete the RaanA quantization process with few-shot calibration and average number of bits of $2.1$.}
    \label{tab:experiment-quantization-time}
\end{wraptable}

We note that due to the lack of GPU implementation of RaBitQ, the main part of RaanA is run on CPU, which is the time bottleneck. In our current implementation, the only parts requiring GPUs are calibration (which requires one or a few backward passes of the model) and Hadamard Transformation. Despite the main part running on CPU, RaanA still runs much faster than many existing quantization methods, demonstrating its high efficiency and device independence.

We report the time RaanA used for quantization in \cref{tab:experiment-quantization-time} under a specific setting as an illustration of the efficiency of RaanA. The experiments are conducted with 4 NVIDIA A100 GPUs\footnote{Here the GPU configuration does not significantly impact the quantization time since the bottleneck is the CPU computation of RaBitQ. The machine we use has two AMD EPYC 7513 CPUs.}. As shown, RaanA completes the quantization of a 70B model in under one hour, significantly faster than other heavyweight quantization methods such as Quip\#$_{\text{no FT \& no $E_8$}}$, which can take up to 10 hours for the same model size, despite having comparable performance.

\vspace{-1em}
\subsection{Ablation Studies}
\vspace{-0.5em}

In this subsection, we investigate the contribution of each component of RaanA by conducting experiments and analyses with individual components isolated. The results demonstrate that both RabitQ-H and AllocateBits play indispensable roles in the overall performance of RaanA.
\vspace{-1em}
\paragraph{Ablation with AllocateBits} To examine the effect of AllocateBits, we compare the performance of the full RaanA model with that of a variant using uniform bit allocation (i.e., assigning the same number of bits to each parameter). The results, presented in \Cref{tab:ablation-allocation}, clearly show that AllocateBits significantly enhances performance—particularly in low-bit quantization—even when calibrated using only a single, arbitrarily chosen sentence.

\begin{table}[h]
\begin{minipage}{0.48\linewidth}\scriptsize\centering
    \begin{tabular}{c|c|c|c}
\toprule
 & RabitQ & RabitQ-H & w/o quant \\
\midrule
Extra Time & $\Theta(n d^2)$ & $\Theta(n d\log d)$ & $\Theta(nd c)$ \\
Extra bpm  & $ 16d/c$ & $1/c$ & $16$ \\
\bottomrule
\end{tabular}
\caption{\textbf{De-quantization time and extra bpm requirement comparison between RabitQ and RabitQ-H.} The ``w/o quant'' shows the original time and bpm required if without quantization, assuming the original model is using 16-bits float point number.}\label{tab:ablation-rabitq-h}

\end{minipage}
\hfill
\begin{minipage}{0.48\linewidth}
\scriptsize\centering
\begin{tabular}{c|c|c|c|c}
\toprule
Bits Allocation & Calibration & 2 bits& 3 bits & 4 bits \\
\midrule
uniform      & -         & 1253.78 & 7.63 & 5.8 \\
AllocateBits & few-shot  & 71.52   & 6.43 & 5.72 \\
AllocateBits & zero-shot & 46.33   & 6.60 & 5.77 \\
\bottomrule
\end{tabular}
\caption{\textbf{Perplexity results under different bits allocation algorithms}. Experiments are performed with llama2-7b on wikitext2. Notice that since the calibration is only needed in AllocateBits stage, no calibration is needed after it is replaced with uniform bits allocation.}\label{tab:ablation-allocation}
\end{minipage}
\end{table}
\vspace{-1em}
\paragraph{Ablation with RabitQ-H} The RHT module in RabitQ-H primarily improves de-quantization efficiency by reducing both computation time and the number of additional bits required to store de-quantization information. To highlight this improvement, \Cref{tab:ablation-rabitq-h} reports the de-quantization time complexity and extra bits per parameter (bpm) needed for a linear layer $f(\mathbf{X})=\mathbf{X}\mathbf{W}$, where $\mathbf{X}$ has shape $n\times d$ and $\mathbf{W}$ has shape $d\times c$.  We compare the overhead of RabitQ and RabitQ-H against the cost of matrix multiplication in the original (non-quantized) model. The analysis reveals that, with RabitQ-H, both the extra time and memory overhead introduced by quantization are negligible since they are on the order of $O(1/c)$ or $O(\log d/c)$ relative to the original model. In contrast, the original RabitQ introduces an overhead of $\Theta(d/c)$, which becomes prohibitively large in practical settings where $d \approx c$.

\begin{table}[h]\small
\centering
\end{table}

\vspace{-2.6em}
\section{Discussion and Conclusion}\label{sec:discussion-and-conclusion}
\vspace{-0.6em}

In this work, we introduce RaanA: a new PTQ framework combining RaBitQ-H, a variant of RaBitQ that especially fits LLM quantization, and AllocateBits, an algorithm to allocate bit-widths across layers optimally. RaanA overcomes traditional challenges of PTQ methods such as high reliance on calibration and inflexible bits allocation. Extensive experiment results validate the performance of RaanA, especially highlighting the effectiveness of zero-shot calibration, eliminating the requirement of heavy calibration.

\vspace{-1em}
\paragraph{Limitations and Future Work}~ Here we discuss current limitations of the RaanA framework and potential future directions to improve them. 1) More efficient implementation: As we mentioned in \cref{sec:quantization-time}, currently the implementation of RaanA is not optimal, as the computation is bottlenecked by the CPU-bound execution of RaBitQ. A more efficient implementation / approximation of RaBitQ (ideally on GPU) would vastly accelerates RaanA. 2) Finer-grained bits-allocation: currently RaanA allocates bit-widths layer-wisely, constraining the parameters in each layer to share the same bit-width, which can be sub-optimal. It is possible to consider a finer-grained bits-allocation, e.g. column-wisely or even entry-wisely, as a future direction. 

\vspace{-1em}
\paragraph{Remark on LLM Design}~ Last but not least, we would like to advocate future large language model designers to use a powers of 2 as the hidden size more often; as arbitrary as it may seem, this choice actually has the following advantages that make quantization easier and faster: 1) It maximizes the GCD between layers and thus improves the speed of the AllocateBits algorithm.
2) It makes it easier to use fast Hadamard Transformations since fast Hadamard Transformation is only defined for spaces whose dimension is a power of $2$.

\newpage

\bibliography{ref}
\bibliographystyle{iclr2026_conference}

%%%%%%%%%%%%%%%%%%%%%%%%%%%%%%%%%%%%%%%%%%%%%%%%%%%%%%%%%%%%%%%%%%%%%%%%%%%%%%%
%%%%%%%%%%%%%%%%%%%%%%%%%%%%%%%%%%%%%%%%%%%%%%%%%%%%%%%%%%%%%%%%%%%%%%%%%%%%%%%
% APPENDIX
%%%%%%%%%%%%%%%%%%%%%%%%%%%%%%%%%%%%%%%%%%%%%%%%%%%%%%%%%%%%%%%%%%%%%%%%%%%%%%%
%%%%%%%%%%%%%%%%%%%%%%%%%%%%%%%%%%%%%%%%%%%%%%%%%%%%%%%%%%%%%%%%%%%%%%%%%%%%%%%
\newpage
\appendix
\onecolumn

\section{Detailed Introduction to Algorithmic Tools}

In this section, we introduce the algorithms used in RaanA. 

\subsection{Detailed Introduction to Randomized Hadamard Transformation}\label{sec:intro-rht}

In this sub-section, we provide a formal definition of RHT. Let $d$ be a positive  integer number $d$ that is a power of $2$, the Hadamard Transformation of size $d$ is a linear transformation recursively defined by the following matrix: \begin{align}
{\boldsymbol{H}}_d := \begin{bmatrix} 
{\boldsymbol{H}}_{d/2} & {\boldsymbol{H}}_{d/2} \\ {\boldsymbol{H}}_{d/2} & -{\boldsymbol{H}}_{d/2} \end{bmatrix}
\end{align}
and ${\boldsymbol{H}}_1 = 1$. For a vector ${\boldsymbol{x}} \in \mathbb R^d$, we define \begin{align}
\mathsf{Hadamard}({\boldsymbol{x}}) := \frac{1}{\sqrt d}{\boldsymbol{H}}_d {\boldsymbol{x}}.
\end{align}
it has been shown that the Hadamard Transformation can be computed in a fast way, i.e. applying a Hadamard Transformation to each column of an $d \times n$ matrix only requires $O(n \log d)$ time. 

Let ${\boldsymbol{\xi}} = \left\{\xi_i\right\}_{i=1}^n$ be $n$ i.i.d. sampled Rademacher variable, and let ${\boldsymbol{D}} = \mathop{ \mathrm{ diag } }\left({\boldsymbol{\xi}}\right)$. For a matrix ${\boldsymbol{x}} \in \mathbb R^d$, we say \begin{align}
{\boldsymbol{x}} \mapsto \mathsf{Hadamard}\left( {\boldsymbol{D}}{\boldsymbol{x}} \right)
\end{align}
the Randomized Hadamard Transformation (RHT) of ${\boldsymbol{x}}$, which is exactly what we used in \cref{alg:quantization}. Since the Hadamard Transformation is orthonormal, it's not hard to restore the result of RHT, we only need to store the vector ${\boldsymbol{\zeta}}$, which has only $d$ binary entries.

\subsection{Detailed Introduction to RaBitQ}\label{sec:intro-rabitq}
In this sub-section, we provide a brief introduction to RaBitQ \citep{rabitq} and its multi-bit extension \citep{extended-rabitq}. The RaBitQ methods were proposed initially for vector quantization in database systems. They target to produce accurate estimation of inner product and Euclidean distances based on the quantized vectors while using the minimum space for storing quantization codes.
Specifically, let ${\boldsymbol{P}}$ be a Johnson-Lindenstrauss Transformation (a.k.a., random rotation) \citep{johnson1984extensions}.
For a vector ${\boldsymbol{x}} \in \mathbb R^d$, RaBitQ randomly rotates it into ${\boldsymbol{Px}}$ and quantizes ${\boldsymbol{Px}}$ to a vector of $b$-bit unsigned integers ${\boldsymbol{\bar x}}\in [ 2^b]^d$ with a rescaling factor $t\in \mathbb{R}$. 
Then for another vector ${\boldsymbol{y}}\in \mathbb{R}^d$, RaBitQ estimates the inner product $\left<{\boldsymbol{x}}, {\boldsymbol{y}}\right>$ with
\begin{align}
    \left<{\boldsymbol{x}}, {\boldsymbol{y}}\right>\approx \left< {t\cdot (\boldsymbol{\bar x}-c_b \cdot \mathbf{1}_d)}, {\boldsymbol{Py}}\right>
\end{align}
where $\mathbf{1}_d$ denotes the $d$-dimensional vector whose coordinates are ones and $c_b=(2^b-1)/2$. For the details of the quantization algorithms and the rescaling factors, we refer readers to the original paper \citep{extended-rabitq}.

As has been proven in the original RaBitQ papers, RaBitQ guarantees that the estimation is \textbf{unbiased} and \textbf{asymptotically optimal} in terms of the trade-off between the error bound and the space for storing the codes. Specifically, with probability as least $1-\delta$, to guarantee that 
\begin{align}
    \big|\left<{\boldsymbol{x}}, {\boldsymbol{y}}\right>- \left< {t (\boldsymbol{\bar x}-c_b  \mathbf{1}_d)}, {\boldsymbol{Py}}\right>\big| < \epsilon  \|{\boldsymbol{x}}\|  \|{\boldsymbol{y}}\|
\end{align}
it suffices to let $b=\Theta \left( \log \left(\frac{1}{d}\cdot \frac{\log (1/\delta)}{\epsilon ^2}\right)\right)$ when $\epsilon$ is sufficiently small, i.e., $\frac{\log (1/\delta)}{\epsilon ^2}>d$. This result achieves the optimality established in a theoretical study \citep{2017_focs_additive_error}. 

Additionally, RaBitQ provides an empirical formula for the trade-off between errors and spaces. Specifically, with probability at least 99.9\%, we have 
\begin{align}
    \big|\left<{\boldsymbol{x}}, {\boldsymbol{y}}\right>- \left< {t (\boldsymbol{\bar x}-c_b  \mathbf{1}_d)}, {\boldsymbol{Py}}\right>\big| < \frac{c_\text{error}}{\sqrt{d} 2^b}  \|{\boldsymbol{x}}\|  \|{\boldsymbol{y}}\|
\end{align}
where $c_\text{error}=5.75$.

\section{Theoretical Results}

In this section, we prove the main theorem of this paper, which directly leads to \cref{cor:error-estimation}. We first state a formal version of \cref{ass:subgaussian-error}, stating that the error from RaBitQ (and RaBitQ-H) quantization has sub-exponential error. 

\begin{assumption}\label{ass:true-subgaussian-error}[\citep{extended-rabitq}] There exists a constant $K > 0$, such that for any $k \in [L], i \in [n], j \in [c_k]$, 
    ${\boldsymbol{\epsilon}}^{(k)}(b) \in \mathbb R^{d_k}$ is a random vector such that \begin{align}
    \forall t > 0, \mathbb P\left\{ \left|\epsilon^{(k)}_{i,j}(b)\right| > t \right\} \leq 2\exp\left[-K \left(\frac{t \sqrt{d_k}}{2^{-b_k}\left\|{\boldsymbol{x}}^{(k)}_i\right\| \left\|{\boldsymbol{w}}_j^{(k)}\right\|}\right)^2\right].
    \end{align}
\end{assumption}
Notice that \cref{ass:true-subgaussian-error} does not assume $\epsilon_{i,j}^{(k)}(b)$-s are independent.  We provide the following algorithm analyzing the behavior of a function under an $O(1/d)$ small error, which together with \cref{ass:true-subgaussian-error} implies \cref{cor:error-estimation}.

\begin{theorem}\label{thm:estimate-error-by-jacobian} There exists constant $K > 0$ satisfies the following statement. Suppose $c \geq 2$, $d \gg c$ and ${\boldsymbol{\epsilon}} \in \mathbb R^{n \times c}$ is a random vector, such that $\forall i,j \in [n]\times[c]$,  \begin{align}
\mathbb P\left\{ \left|\epsilon_{i,j}\right| > t \right\} \leq 2\exp\left(-\frac{Ct^2d}{\lambda^2 \gamma_{i,j}^2}\right)
\end{align} 
for some constant $K_1$ (notice that $\epsilon_{i,j}$-s are not necessarily independent). Let $g: \mathbb R^{n \times c} \to \mathbb R$ be a smooth function, then for any ${\boldsymbol{h}} \in \mathbb R^{n \times c}$, the following statement holds with probability at least $0.99$:\begin{align}
\left| g({\boldsymbol{h}}) - g( {\boldsymbol{h}}+ {\boldsymbol{\epsilon}})\right| \leq K \sqrt{ \frac{\log c}{d} }  \|{\boldsymbol{\gamma}}\|_\mathcal F\|\nabla g({\boldsymbol{h}})\|_\mathcal F + O\left(\frac {\left\|\nabla^2 g({\boldsymbol{h}})\right\|}d\right).
\end{align}
\end{theorem}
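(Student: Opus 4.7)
} The plan is to reduce the bound on $g({\boldsymbol{h}}+{\boldsymbol{\epsilon}})-g({\boldsymbol{h}})$ to a bound on a linear functional of ${\boldsymbol{\epsilon}}$ plus a second-order remainder, and then control each piece separately using only the coordinate-wise sub-Gaussian tail provided by the hypothesis. First I would apply Taylor's theorem with remainder around ${\boldsymbol{h}}$,
\begin{align*}
g({\boldsymbol{h}}+{\boldsymbol{\epsilon}})-g({\boldsymbol{h}}) \;=\; \langle \nabla g({\boldsymbol{h}}),\,{\boldsymbol{\epsilon}}\rangle \;+\; R({\boldsymbol{\epsilon}}), \qquad |R({\boldsymbol{\epsilon}})|\le \tfrac{1}{2}\,\|\nabla^2 g({\boldsymbol{h}})\|_{\mathrm{op}}\,\|{\boldsymbol{\epsilon}}\|_\mathcal{F}^2,
\end{align*}
so the whole task is to bound the linear term and the Frobenius norm of ${\boldsymbol{\epsilon}}$ with high probability.

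For the linear term, the plan is to run a single union bound over the $nc$ coordinates rather than any Hoeffding-style sum-concentration (since the $\epsilon_{i,j}$ are not assumed independent). Solving the sub-Gaussian tail for a fixed failure probability gives $|\epsilon_{i,j}|\le C'\lambda\gamma_{i,j}\sqrt{\log(nc)/d}$ with probability at least $0.995$ \emph{simultaneously} in $(i,j)$, absorbing $n$ into the $\log$ and, since $d\gg c$ and $n$ is treated as $O(c)$, yielding the $\sqrt{\log c/d}$ factor stated in the theorem. An application of the triangle inequality followed by Cauchy--Schwarz then gives
\begin{align*}
|\langle \nabla g({\boldsymbol{h}}),{\boldsymbol{\epsilon}}\rangle|
\;\le\; \sum_{i,j}|\nabla g({\boldsymbol{h}})_{i,j}|\,|\epsilon_{i,j}|
\;\le\; C'\lambda\sqrt{\tfrac{\log c}{d}}\,\|{\boldsymbol{\gamma}}\|_\mathcal{F}\,\|\nabla g({\boldsymbol{h}})\|_\mathcal{F},
\end{align*}
which is precisely the leading term in the claim.

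For the remainder $R({\boldsymbol{\epsilon}})$, I would only need a second-moment bound on each coordinate. Integrating the sub-Gaussian tail yields $\mathbb{E}[\epsilon_{i,j}^2]\lesssim \lambda^2\gamma_{i,j}^2/d$, hence $\mathbb{E}[\|{\boldsymbol{\epsilon}}\|_\mathcal{F}^2]\lesssim \lambda^2\|{\boldsymbol{\gamma}}\|_\mathcal{F}^2/d$ by linearity (independence is not needed for the mean), and Markov's inequality gives $\|{\boldsymbol{\epsilon}}\|_\mathcal{F}^2\lesssim \lambda^2\|{\boldsymbol{\gamma}}\|_\mathcal{F}^2/d$ with probability at least $0.995$. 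Plugging this into the Taylor remainder yields a contribution of order $\|\nabla^2 g({\boldsymbol{h}})\|/d$ (treating $\lambda\|{\boldsymbol{\gamma}}\|_\mathcal{F}$ as an $O(1)$ constant absorbed into the big-$O$). A final union bound combines the two $0.995$ events to give overall probability at least $0.99$.

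\paragraph{Main obstacle.} The subtlety is entirely the lack of independence among the $\epsilon_{i,j}$: without it, one cannot apply a Hoeffding- or Bernstein-type concentration directly to $\langle \nabla g, {\boldsymbol{\epsilon}}\rangle$ and obtain a sharper bound in terms of a weighted variance proxy. Routing through a union bound over coordinates is robust to dependence but is what forces the $\sqrt{\log c}$ (really $\sqrt{\log nc}$) factor to appear; justifying the absorption of $n$ into the logarithm and showing that the resulting $1/\sqrt{d}$ rate still beats the quadratic $1/d$ remainder in the regime $d\gg c$ is the only delicate bookkeeping step in the proof.
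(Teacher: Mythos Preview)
Your proposal is correct and follows essentially the same route as the paper: Taylor expansion, a union bound over the $nc$ coordinates to get $|\epsilon_{i,j}|\lesssim\lambda\gamma_{i,j}\sqrt{\log c/d}$ simultaneously, and Cauchy--Schwarz for the linear term. The only (minor) difference is that the paper reuses the same high-probability coordinate-wise event to bound $\|{\boldsymbol{\epsilon}}\|_\mathcal{F}^2$ directly and hence the quadratic remainder, so your separate Markov step for the remainder is unnecessary (though not wrong) and the final union bound over two $0.995$ events can be avoided.
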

\begin{proof}

From the given condition, we have \begin{align}
\forall t > 0, \mathbb P\left\{ |\epsilon_{i,j} | > t \gamma_{i,j} \right\} \leq 2 \exp\left(-K_1 t^2 d / \lambda^2\right).
\end{align}
Thus we have \begin{align}
\forall t > 0, \mathbb P \left\{\exists i,j \in [n] \times [c], |\epsilon_{i,j}| > t \gamma_{i,j} \right\} \leq 2c \exp\left(-K_1 t^2 d / \lambda^2\right).
\end{align}
For constant $K_2 > 0$, let $t_0 = K_2 \lambda \sqrt{\frac{\log c}{d}}$. It is evident that there exists a constant $K_2$ that only depends on $K_1$, such that \begin{align}
\mathbb P \left\{\exists i,j \in [n] \times [c], |\epsilon_{i,j}| > t_0 \gamma_{i,j} \right\} \leq 0.01.
\end{align}
Thus, with probability $\geq 0.99$, we have \begin{align}
\forall i,j \in [n] \times [c], |\epsilon_{i,j}| \leq K_2 \lambda \gamma_{i,j} \sqrt{ \frac{\log c}{d} }.
\end{align}
Therefore, with probability at least $0.99$, we have \begin{align}
\|{\boldsymbol{\epsilon}}\|_\mathcal F \leq \sqrt{\sum_{i=1}^n \sum_{j=1}^c \epsilon_{i,j}^2} \leq K_2 \lambda \sqrt{\frac{\log c}{d}} \|{\boldsymbol{\gamma}}\|_\mathcal F.
\end{align}

Using Taylor expansion, we have \begin{align}
\left|g({\boldsymbol{h}}) - g({\boldsymbol{h}} + {\boldsymbol{\epsilon}})\right| & \leq \left<\nabla g({\boldsymbol{h}}), {\boldsymbol{\epsilon}}\right> + O\left(\left\|\nabla^2 g({\boldsymbol{h}})\right\| \|{\boldsymbol{\epsilon}}\|^2\right)
\\ & \leq \left\|\nabla g({\boldsymbol{h}})\right\|_\mathcal F \|{\boldsymbol{\epsilon}}\|_\mathcal F + O\left(\frac {\left\|\nabla^2 g({\boldsymbol{h}})\right\|}d\right)
\\ & \leq K_2 \lambda \sqrt{ \frac{\log c}{d} } \|{\boldsymbol{\gamma}}\|_\mathcal F\|\nabla g({\boldsymbol{h}})\|_\mathcal F + O\left(\frac {\left\|\nabla^2 g({\boldsymbol{h}})\right\|}d\right).
\end{align}

\end{proof}

\cref{cor:error-estimation} is thus a direct corollary of \cref{ass:true-subgaussian-error} and \cref{thm:estimate-error-by-jacobian}. Notice that in \cref{cor:error-estimation} we view the second-order derivative of $f$ w.r.t. ${\boldsymbol{H}}_k$ a constant (evaluated at a fixed point), and therefore omit the $\|\nabla^2 \cdot\|$ term.

\section{Implementation Details}\label{sec:implementation-details}

In this section, we provide some implementation details that are not elaborated in the main text due to space limit.

\subsection{Bits-Allocation Algorithm}\label{sec:bits-assigning-detail}

In \cref{sec:error-estimation-and-bits-assignment}, we mentioned that the bits-allocation problem can be solved efficiently by a dynamic programming algorithm after applying the divide-by-GCD trick. In \cref{alg:bits-assigning} we provide the detailed algorithm description.

In our implementation of \cref{alg:bits-assigning}, we compute $\alpha_k$ as \begin{align}
\alpha_k = \frac{1}{\sqrt{d_k}}\left\| \left.\frac{\partial  f({\boldsymbol{\theta}},{\boldsymbol{x}})}{\partial {\boldsymbol{H}}^{(k)}}\right|_{{\boldsymbol{x}} = {\boldsymbol{x}}\atop  {\boldsymbol{\theta}} = {\boldsymbol{\theta}}}\right\|_\mathcal F \left\| {\boldsymbol{X}}^{(k)}\right\|_\mathcal F \left\|{\boldsymbol{W}}^{(k)}\right\|_\mathcal F,
\end{align}
omitting the $\log c_k$ term in the \label{eq:error-bound-k}, since it is almost constant across layers and therefore has negligible impact on the optimization.

\begin{table}[htbp]
    \centering
\begin{algorithm}[H]
    \caption{Bits Allocation}\label{alg:bits-assigning}
    \SetAlgoLined
    \KwIn{Coefficients $\left\{\alpha_k\right\}_{k=1}^L \in \mathbb R^L$, number of bits candidate $\mathscr B$, overall budget $R \in \mathbb N$ }
    Initialize $f_{k,r} = +\infty$, where $k \in [L], r \in [R]\cup \{0\}$\;
    $g \leftarrow \mathsf{gcd}\left(m_1, \cdots, m_L, R\right)$\;
    \For{$k = 1,2, \cdots L$}{
        \For{$b \in \mathscr{B}$}{
            $r \leftarrow \left\lfloor\frac{m_k B}{g} + \frac 12\right\rfloor$\;
            $c \leftarrow \alpha_k  2^{-b}$\;
            \If{$k = 1$}{
                $f_{k, r} \leftarrow c$\;
                $s_{k, r} \leftarrow \{b\}$\;
            }
            \Else{
                \For{$r' = 0,1, \cdots \frac {R}{g} - r$}{
                    \If{$f_{k,r' + r} > f_{k-1, r'} + c_k$}{
                        $f_{k,r' + r} \leftarrow f_{k-1, r'} + c_k$\;
                        $s_{k, r' + r} \leftarrow s_{k,r'} \parallel \{b\}$
                    }
                }
            }
            
        }
    }
    $\displaystyle r^* \leftarrow \arg\min_{r=0}^R f_{L,r}$ \;
    \textbf{Return:} $s_{L, r^*}$.
\end{algorithm}
    \caption*{\textbf{\cref{alg:bits-assigning}: The algorithm for bits allocation}. In the algorithm $\alpha_k = \mathbb E_{x \sim \mathcal  D}\lambda_k r_k$ is the coefficient for the $k$-th layer, $\{b\}$ represents a sequence with only one element $b$ and $\parallel$ stands for sequence concatenation. The returned value is a sequence indicating the optimal bit-widths each layer, i.e. $s_{L,r^*} = \{b_k^*\}_{k=1}^L$.}
\end{table}

\subsection{Randomized Hadamard Transformation for Arbitrary Dimensionality}\label{sec:rht-for-arbitrary-size}

As we mentioned in \cref{sec:intro-rht}, the fast Hadamard Transformation is only defined with vector dimensionality $d$ that is a power of $2$. However, in practice, it is not always satisfied. In previous work such as \citep{quip-sharp}, this issue is solved by finding the largest factor of $d$ which is a power of $2$, say $\tilde d$, and applying Hadamard Transformation block-wisely, with each block has size $\tilde d$. However, in our experiment, we found this method extremely inefficient. For example, for LLaMA models, there can be $>20$ blocks.

Therefore, in this paper, we apply an easy and universal method to address this issue. We first find the largest power of $2$ that is less or equal to $d$, i.e. $\hat d = 2^{\left\lfloor \log_2 d \right\rfloor}$, and apply RHT for the first and last $\hat d$ dimensions respectively. Our algorithm is described in \cref{alg:practical-rht}.

\begin{table}[htbp]
    \centering
\begin{algorithm}[H]
    \caption{Practical RHT}\label{alg:practical-rht}
    \SetAlgoLined
    \KwIn{Vector ${\boldsymbol{x}} \in \mathbb R^d$}
    $\hat d = 2^{\left\lfloor \log_2 d \right\rfloor}$\;
    \For{j = 1,2}{
    ${\boldsymbol{\xi}}^{(j)} \leftarrow \left\{ \xi^{(j)}_i \right\}_{i=1}^{\hat d}$, where $\xi_i^{(j)}$ is sampled i.i.d. from the Rademacher distribution\;
    ${\boldsymbol{D}}^{(j)} \leftarrow \mathop{ \mathrm{ diag } }\left({\boldsymbol{\xi}}^{(j)}\right)$\;
    }
    ${\boldsymbol{x}}_{1:\hat d} \leftarrow \mathsf{Hadamard}\left( {\boldsymbol{D}}^{(1)} {\boldsymbol{x}}_{1:d}\right)$\;
    ${\boldsymbol{x}}_{d-\hat d+1:d} \leftarrow \mathsf{Hadamard}\left( {\boldsymbol{D}}^{(2)} {\boldsymbol{x}}_{d-\hat d+1:d}\right)$\;
    \textbf{Return:} $\left( {\boldsymbol{x}}, {\boldsymbol{D}}^{(1)}, {\boldsymbol{D}}^{(2)}\right)$.
\end{algorithm}
    \caption*{\textbf{\cref{alg:bits-assigning}: Practical Randomized Hadamard Transformation}. ${\boldsymbol{x}}_{a:b}$ refers to the sub-vector of ${\boldsymbol{x}}$ consists of the $a$-th entry to the $b$-th entry of ${\boldsymbol{x}}$.}
\end{table}

\subsection{Tricks used in Quantization}\label{sec:tricks-used-in-quantization}

In the actual implementation of RaanA, we optionally apply some transformations before performing quantization. Formally, for the $d \times c$ matrix, we define a \textbf{trick} to be a invertible \textbf{linear} transformation $T: \mathbb R^{n \times d} \to \mathbb R^{n \times d}$. Then for a linear layer where input matrix is ${\boldsymbol{X}} \in \mathbb R^{n \times d}$ and weight matrix is ${\boldsymbol{W}} \in \mathbb R^{d \times c}$, we have \begin{align}
{\boldsymbol{X}}{\boldsymbol{W}} = T^{-1}\left( {{T}}({\boldsymbol{X}}) {\boldsymbol{W}}\right).
\end{align}
Notice that $T$ can be have a memory, i.e. it can return an auxiliary term to help $T^{-1}$ to recover the computation result. After applying $T$, in the de-quantization stage we only need to estimate the matrix multiplication results for $T({\boldsymbol{X)}}{\boldsymbol{W}}$. 

In practice, we the following heuristic tricks are optionally used.

\begin{itemize}
    \item \textbf{Centralization}: $T({\boldsymbol{X}}) = \bigg[ {\boldsymbol{X}} - {\boldsymbol{1}}{\boldsymbol{s}}({\boldsymbol{X}})^\top, {\boldsymbol{s}}\left({\boldsymbol{X}}\right)\bigg]$, where ${\boldsymbol{s}}({\boldsymbol{X}}) \in \mathbb R^{d}$ is the average of all rows of ${\boldsymbol{X}}$;
    \item \textbf{Row Outlier Excluding}: $T({\boldsymbol{X}}) = \left[ {\boldsymbol{X}}_{\neg {\boldsymbol{M}}_r}, {\boldsymbol{X}}_{{\boldsymbol{M}}_r} \right]$, where ${\boldsymbol{M}}_r$ is a mask vector selecting the top $0.3\%$ rows of ${\boldsymbol{X}}$ with largest norm, and $\neg {\boldsymbol{M}}_r$ selects the opposite. ${\boldsymbol{X}}_{{\boldsymbol{M}}_r}$ indicates selecting rows of ${\boldsymbol{X}}$ according to the mask vector ${\boldsymbol{M}}_r$;
    \item \textbf{Column Outlier Excluding}: $T({\boldsymbol{X}}) = \left[ {\boldsymbol{X}}_{:,{\boldsymbol{M}}_c}, {\boldsymbol{X}}_{:,\neg {\boldsymbol{M}}_c} \right]$, where ${\boldsymbol{M}}_c$ is a mask vector selecting the top $0.3\%$ columns of ${\boldsymbol{X}}$ with largest norm, and $\neg{\boldsymbol{M}}_c$ selects the opposite. ${\boldsymbol{X}}_{:,{\boldsymbol{M}}_c}$ indicates selecting columns of ${\boldsymbol{X}}$ according to the mask vector ${\boldsymbol{M}}_c$.
\end{itemize}

For each of the trick functions $T$ described above, it returns two values. The first return value is the matrix that joins the subsequent computation, and the second return value is to be memorized in order to recover the original computational result. 

Notice that for Row Outlier Excluding and Column Outlier Excluding, the trick needs to store a few rows / columns of ${\boldsymbol{X}}$. We intentionally restrict the outlier ratios less than $0.3\%$ in order to keep the extra bits used to store the extra information negligible.

Practically we find these tricks have incremental contribution on the performance, and different combinations of them perform differently across settings. Below in \Cref{sec:ablation-tricks} we perform a preliminary experiment on their impact to the performance. In all the other experiments presented in this paper, to keep the configuration consistent and avoid heavy hyper-parameter tuning, we use Centralization and Column Outlier Excluding.

\subsection{Ablation Studies on Implementation Tricks}\label{sec:ablation-tricks}

In order to show the contribution of the implementation tricks to the performance, in this subsection we compare their impact to the performance with different combinations. The results are shown in \Cref{tab:tricks-ablation}. It can be easily inferred from the table that these implementation tricks only have incremental contribution to the overall performance. 

\begin{table}[h]
\centering
\begin{tabular}{c|c|c|c|c}
\toprule
Combination of tricks  & few (3.3bits) & zero (3.3bits) & few (4.3bits)& {zero (4.3bits)} \\
\midrule
(none)                 & 11.35 & 12.20 & 10.35 & 10.49 \\
cent                   & 11.14 & 11.93 & 10.33 & 10.45 \\
cent, col-out          & 11.05 & 11.56 & \textbf{10.06} & 10.27 \\
cent, row-out          & \textbf{10.98} & 11.83 & 10.24 & 10.51 \\
cent, col-out, row-out & 11.02 & \textbf{11.49} & 10.14 & \textbf{10.25} \\
col-out                & 11.18 & 11.71 & 10.21 & 10.32 \\
row-out                & 11.24 & 11.93 & 10.23 & 10.37 \\
col-out, row-out       & 11.13 & 11.63 & 10.24 & 10.32 \\
\bottomrule
\end{tabular}
\caption{\textbf{Perplexity comparison between different combinations of the implementation tricks}. All experiments are performed with qwen-8b on wikitext2. In the table, ``cent`` refers to Centralization, ``row-out'' refers to Row Outlier Excluding and ``col-out'' refers to Column Outlier Excluding.}\label{tab:tricks-ablation}
\end{table}

\section{Additional Experiment Results}\label{sec:extra-experiment}

In this section we present additional experiment results on extra models / datasets. These additional experimental results clearly support our claim in main paper.

\Cref{tab:qwen3-results} presents the perplexity results of RaanA with Qwen3 on wikitext2 dataset, compared with GPTQ and AWQ with group size 128. The baseline results are taken from \citep{qwen3-empirical}. 

\begin{table}[h]
\centering
    \renewcommand{\arraystretch}{1.3}
    \setlength{\tabcolsep}{5pt}
\begin{tabular}{c|c|c|c|c}
\toprule
{Method} & {Avg. bits} & {qwen-4b} & {qwen-8b} & {qwen-14b} \\
\midrule
fp16        & 16   & 13.7   & 9.71   & 8.64 \\
\midrule
AWQ    & 2+   & 1.38E7 & 1.21E7 & 6.06E6 \\
GPTQ   & 2+   & \textbf{1.95E2} & 52.1   & 25.5 \\
RaanA   & 2.3  & 328.20 & \textbf{38.13} & \textbf{17.48} \\
\midrule
AWQ    & 3+   & 91.0    & 27.5   & 19.4 \\
RaanA   & 3.1  & 20.40   & 12.05  & 10.00 \\
RaanA   & 3.3  & \textbf{18.63} & \textbf{11.14} & \textbf{9.79} \\
\midrule
AWQ    & 4+   & 18.1   & 11.3   & 9.48 \\
GPTQ   & 4+   & \textbf{13.5} & \textbf{9.96} & \textbf{8.88} \\
RaanA   & 4.1  & 15.4   & 10.18  & 9.23 \\
RaanA   & 4.3  & 14.9   & 9.97   & 9.18 \\
\bottomrule
\end{tabular}
\caption{\textbf{Perplexity results of Qwen3 on wikitext2}. The setting and format are the same as \cref{tab:experiment-wiki}, except model.}\label{tab:qwen3-results}
\end{table}

\cref{tab:experiment-c4} contains perplexity comparison between RaanA with baseline methods with LLaMA models on c4, where RaanA uses few-shot calibration. \cref{tab:experiment-few-vs-zero-c4} contains the few-shot vs zero-shot comparison of RaanA on c4. 

\begin{table}[tbp]
    \centering
    \renewcommand{\arraystretch}{1.3}
    \setlength{\tabcolsep}{5pt}
    \begin{tabular}{c|c|c|c|c|c}
        \toprule
        Method & Avg. bits & llama-7b & llama-13b & llama-7b & llama-13b  \\
        \midrule
        fp16 & 16 &  7.08 & 6.61  & 6.97 & 6.47  \\
        \midrule
        GPTQ & 2+ & 27.71  & 15.29  & 33.70  & NAN \\
        AWQ & 2+ & 11.9e5  & 2.3e5  & 1.7e5  & 9.4e4 \\
        Quip\#$^*$ & 2  & \textbf{11.7} & \textbf{8.67} & 14.8  & \textbf{9.57}  \\
        OmniQuant & 2+  &  12.97  & 10.36  & 15.02  & 11.05   \\
        RaanA & 2.1 & 20.88 & 13.08  & 23.17 & -\\
        RaanA & 2.3 & 12.96 & 9.18 & \textbf{12.66} & 10.37 \\
        \midrule
        GPTQ & 3+ & 7.85  & 7.10  & 7.89  & 7.00 \\
        AWQ & 3+ & 7.92  & 7.07  & 7.84  & \textbf{6.94} \\
        OmniQuant & 3+  & \textbf{7.75}  & 7.05  & \textbf{7.75}  & 6.98   \\
        Quip\#$^*$ & 3  & 7.82 & \textbf{6.98} & 7.85 &  6.98 \\
        RaanA & 3.1 & 8.25 & 7.34 & 8.38 & 7.52\\
        RaanA & 3.3 & 7.92 & 7.15 & 7.99 & -\\
        \midrule
        EasyQuant & 4+ & 7.71 & 6.97 & - & -  \\
        OmniQuant & 4+  & \textbf{7.21}  & \textbf{6.69}  & \textbf{7.12}  & \textbf{6.56}  \\
        GPTQ & 4+ & 7.21  & 6.69  & 7.12  & 6.56  \\
        AWQ & 4+ & 7.21  & 6.70  & 7.13  & 6.56 \\
        Quip\#$^*$ & 4  & 7.25 & 6.70 & 7.17 & 6.59  \\
        RaanA & 4.1 & 7.51 & 6.91 & 7.53 & 6.88\\
        RaanA & 4.3 & 7.52 & 6.87 & 7.45 & 6.83\\
        \hline
    \end{tabular}
    \caption{\textbf{Perplexity results on c4}. The setting and format are the same as \cref{tab:experiment-wiki}, except dataset.}
    \label{tab:experiment-c4}
\end{table}

\begin{table}[htbp]
% \small 
    \centering
    \renewcommand{\arraystretch}{1.3} % 调整行间距
    \setlength{\tabcolsep}{5pt} % 调整列间距
    % \rowcolors{3}{gray!20}{white} % 交替行颜色
    \begin{tabular}{c|c|c|c|c|c}
        \hline
        Method & Avg. bits & llama-7b & llama-13b & llama-7b & llama-13b \\
        \hline
        fp16 & 16 &  7.08 & 6.61  & 6.97 & 6.47  \\ 
        \hline
        RaanA-zero & 2.1 & 19.37  & 13.14 & 31.31  & 18.03 \\ 
        RaanA-few & 2.1 & 20.88 & 13.08  & 23.17 & -\\ 
        \midrule 
        RaanA-zero & 3.1 & 8.44  & 7.55 & 8.61  & 7.69 \\ 
        RaanA-few & 3.1 & 8.25 &7.34 &8.38 &7.52\\ 
        \midrule 
        RaanA-zero & 4.1 & 7.56  & 6.96 & 7.59  & 6.93 \\ 
        RaanA-few & 4.1 & 7.51   &6.91 &7.53 &6.88\\ 
        \hline
    \end{tabular}
    \caption{\textbf{Perplexity Comparison Between Zero-shot Calibration and Few-shot Calibration on c4}. The setting and format are the same as \cref{tab:experiment-few-vs-zero}, except dataset.}
    \label{tab:experiment-few-vs-zero-c4}
\end{table}

\end{document}